\def\D{{\bf D}}
\def\H{{\bf H}}
\def\I{{\bf I}}
\def\R{{\bf R}}
\def\S{{\bf S}}
\def\U{{\bf U}}
\def\u{{\bf u}}
\def\V{{\bf V}}
\def\v{{\bf v}}
\def\W{{\bf W}}
\def\X{{\bf X}}
\def\x{{\bf x}}
\def\Y{{\bf Y}}
\def\y{{\bf y}}
\def\z{{\bf z}}
\def\0{{\bf 0}}
\def\1{{\bf 1}}
\def\CM{{\mathcal C}}
\def\OM{{\mathcal O}}
\def\WM{{\mathcal W}}
\def\RB{{\mathbb R}}
\def\EB{{\mathbb E}}
\def\PB{{\mathbb P}}
\def\bet{\mbox{\boldmath$\beta$\unboldmath}}
\def\Ph{\mbox{\boldmath$\Phi$\unboldmath}}
\def\Si{\mbox{\boldmath$\Sigma$\unboldmath}}
\def\argmin{\mathop{\rm argmin}}
\def\lsr{\textrm{lsr}}
\def\rk{\mathrm{rank}}
\def\diag{\mathsf{diag}}
\begin{document}

\mainmatter  

\title{Sharpened Error Bounds for Random Sampling Based $\ell_2$ Regression}

\titlerunning{Error Bound for Random Sampling Based $\ell_2$ Regression}

%
%

\author{Shusen Wang}
\institute{College of Computer Science \& Technology, Zhejiang Unversity\\
wss@zju.edu.cn}
%
\authorrunning{Shusen Wang}
%
%

\toctitle{Sharpened Error Bound for the Random Sampling Based $\ell_2$ Regression}
\tocauthor{Wang}
\maketitle

\begin{abstract}
Given a data matrix $\X \in \RB^{n\times d}$ and a response vector $\y \in \RB^{n}$,
suppose $n>d$,
it costs $\OM(n d^2)$ time and $\OM(n d)$ space to solve
the least squares regression (LSR) problem.
When $n$ and $d$ are both large, exactly solving the LSR problem is very expensive.
When $n \gg d$, one feasible approach to speeding up LSR is to randomly embed $\y$ and all columns of $\X$ into a smaller subspace $\RB^c$;
the induced LSR problem has the same number of columns but much fewer number of rows,
and it can be solved in $\OM(c d^2)$ time and $\OM(c d)$ space.

We discuss in this paper two random sampling based methods for solving LSR more efficiently.
Previous work showed that the leverage scores based sampling based LSR achieves $1+\epsilon$ accuracy when $c \geq \OM(d \epsilon^{-2} \log d)$.
In this paper we sharpen this error bound, showing that $c = \OM(d \log d + d \epsilon^{-1})$ is enough for achieving $1+\epsilon$ accuracy.
We also show that when $c \geq \OM(\mu d \epsilon^{-2} \log d )$, the uniform sampling based LSR attains a $2+\epsilon$ bound with positive probability.
\end{abstract}

\section{Introduction} \label{sec:introduction}

Given $n$ data instances $\x^{(1)}, \cdots , \x^{(n)}$, each of dimension $d$, and $n$ responses $y_1, \cdots , y_n$,
it is interesting to find a model $\bet \in \RB^d$ such that $\y = \X \bet$.
If $n > d$, there will not in general exist a solution to the linear system,
so we instead seek to find a model $\bet_{\lsr}$ such that $\y \approx \X \bet_\lsr$.
This can be formulated as the least squares regression (LSR) problem:
\begin{equation} \label{eq:least_square}
\bet_{\lsr} \; = \; \argmin_{\bet \in \RB^{d}} \big\| \y - \X \bet\big\|_2^2 .
\end{equation}
Suppose $n\geq d$, in general it takes $\OM(n d^2)$ time and $\OM(n d)$ space to compute $\bet_{\lsr}$
using the Cholesky decomposition, the QR decomposition, or the singular value decomposition (SVD) \cite{golub1996matrix}.

LSR is perhaps one of the most widely used method in data processing,
however, solving LSR for big data is very time and space expensive.
In the big-data problems where $n$ and $d$ are both large,
the $\OM(n d^2)$ time complexity and $\OM(n d)$ space complexity make LSR prohibitive.
So it is of great interest to find efficient solution to the LSR problem.
Fortunately, when $n \gg d$, one can use a small portion of the $n$ instances instead of using the full data to approximately compute $\bet_{\lsr}$,
and the computation cost can thereby be significantly reduced.
Random sampling based methods \cite{drineas2008cur,ma2014statistical,mahoney2011ramdomized}
and random projection based methods \cite{clarkson2013low,drineas2011faster}
have been applied to make LSR more efficiently solved.

Formally speaking, let $\S \in \RB^{c \times n}$ be a random sampling/projection matrix,
we solve the following problem instead of (\ref{eq:least_square}):
\begin{equation} \label{eq:approximate_least_square}
\tilde{\bet}_{\S} \; = \; \argmin_{\bet \in \RB^{d}} \big\| \S \y - \S \X \bet\big\|_2^2 .
\end{equation}
This problem can be solved in only $\OM(c d^2)$ time and $\OM(c d)$ space.
If the random sampling/projection matrix $\S$ is constructed using some special techniques,
then it is ensured theoretically that $\tilde{\bet}_{\S} \approx \bet_{\lsr}$
and that
    \begin{eqnarray} \label{eq:1+epsilon_bound}
        \big\| \y - \X \tilde{\bet}_\S \big\|_2^2 \; \leq \; (1+\epsilon)\, \big\| \y - \X \bet_{\lsr} \big\|_2^2
    \end{eqnarray}
hold with high probability.
There are two criteria to evaluate random sampling/projection based LSR.
\begin{itemize}
\item {\bf Running Time.}
    That is, the total time complexity in constructing $\S \in \RB^{c\times n}$ and computing $\S \X \in \RB^{c \times d}$.
\item {\bf Dimension after Projection.}
    Given an error parameter $\epsilon$, we hope that there exists a polynomial function $C(d, \epsilon)$ such that if $c > C(d, \epsilon)$,
    the inequality (\ref{eq:1+epsilon_bound}) holds with high probability for all $\X \in \RB^{n\times d}$ and $\y\in \RB^{n}$.
    Obviously $C(d, \epsilon)$ is the smaller the better because the induced problem (\ref{eq:approximate_least_square})
    can be solved in less time and space if $c$ is small.
\end{itemize}

\subsection{Contributions}

The leverage scores based sampling is an important random sampling technique widely studied and empirically evaluated in the literature
\cite{clarkson2013low,drineas2006sampling,drineas2008cur,gittens2013revisiting,ma2014statistical,mahoney2009matrix,mahoney2011ramdomized,wang2013improving,wang2014efficient}.
When applied to accelerate LSR,
error analysis of the leverage scores based sampling is available in the literature.
It was shown in \cite{drineas2006sampling} showed that by using the leverage scores based sampling with replacement, when
\[
c \geq \OM\big( d^2 \epsilon^{-2} \big),
\]
the inequality (\ref{eq:1+epsilon_bound}) holds with high probability.
Later on, \cite{drineas2008cur} showed that by using the leverage scores based sampling without replacement,
\[
c \geq \OM\big( d \epsilon^{-2}  \log d \big)
\]
is sufficient to make (\ref{eq:1+epsilon_bound}) hold with high probability.
In Theorem~\ref{thm:leverage_score_sampling} we show that (\ref{eq:1+epsilon_bound}) holds with high probability when
\[
c \geq \OM\big( d \log d + d \epsilon^{-1} \big),
\]
using the same leverage scores based sampling without replacement.
Our results are described in Theorem~\ref{thm:leverage_score_sampling}.
Our proof techniques are based on the previous work \cite{drineas2008cur,drineas2011faster},
and our proof is self-contained.

Though uniform sampling can have very bad worst-case performance when applied to the LSR problem \cite{ma2014statistical},
it is still the simplest and most efficient strategy for column sampling.
Though the uniform sampling is uniformly worse than the leverage score based sampling from an algorithmic perspective,
it is not true from a statistical perspective \cite{ma2014statistical}.
Furthermore, when the leverage scores of $\X$ are very homogeneous,
the uniform sampling has virtually the same performance as the leverage scores based sampling \cite{ma2014statistical}.
So uniform sampling is still worth of study.
We provide in Theorem~\ref{thm:uniform_sampling} an error bound for the uniform sampling based LSR.
We show that when
\[
c > \OM\big(\mu d \epsilon^{-2} \log d \big),
\]
the uniform sampling based LSR attains a $2+\epsilon$ bound with positive probability.
Here $\mu$ denotes the matrix coherence of $\X$.

\begin{algorithm}[tb]
   \caption{The Leverage Scores Based Sampling (without Replacement).}
   \label{alg:lev_sampling}
\algsetup{indent=2em}
\begin{small}
\begin{algorithmic}[1]
   \STATE {\bf Input:} an $n\times d$ real matrix $\X$, target dimension $c < n$.
   \STATE (Exactly or approximately) compute the leverage scores of $\X$: $l_1 , \cdots , l_n$;
   \STATE Compute the sampling probabilities by $p_i = \min \{1, c l_i / d \}$ for $i = 1$ to $n$;
   \STATE Denote the set containing the indices of selected rows by $\CM$, initialized by $\emptyset$;
   \STATE For each index $i \in [n]$, add $i$ to $\CM$ with probability $p_i$;
   \STATE Compute the diagonal matrix $\D = \diag (p_1^{-1}, \cdots, p_n^{-1})$;
   \RETURN $\S \longleftarrow$ the rows of $\D$ indexed by $\CM$.
\end{algorithmic}
\end{small}
\end{algorithm}

\section{Preliminaries and Previous Work}

For a matrix $\X = [x_{ij}] \in \RB^{n\times d}$,
we let $\x^{(i)}$ be its $i$-th row,
$\x_j$ be its $j$-th column,
$\|\X\|_F = \big( \sum_{i,j} x_{ij}^2 \big)^{1/2}$ be its Frobenius norm,
and $\|\X\|_2 = \max_{\|\z\|_2=1} \|\X \z\|_2$ be its spectral norm.
We let $\I_n$ be an $n\times n$ identity matrix and let $\0$ be an all-zero matrix with proper size.

We let the thin singular value decomposition of $\X\in \RB^{n\times d}$ be
\[
\X
\;=\; \U_\X \Si_\X \V_\X^T
\;=\; \sum_{i=1}^{d} \sigma_i (\X) \, \u_{\X,i} \, \v_{\X,i}^T .
\]
Here $\U_\X$, $\Si_\X$, and $\V_\X$ are of sizes $n\times d$, $d\times d$, and $d\times d$,
and the singular values $\sigma_1 (\X) , \cdots , \sigma_d (\X)$ are in non-increasing order.
We let $\U_\X^\perp$ be an $n \times (n-d)$ column orthogonal matrix such that $\U_\X^T \U_\X^\perp = \0$.
The condition number of $\X$ is defined by $\kappa (\X) = \sigma_{\max} (\X) / \sigma_{\min} (\X)$.

Based on SVD, the (row) {\it statistical leverage scores} of $\X\in \RB^{n\times d}$ is defined by
\begin{eqnarray}
l_i = \big\| \u_\X^{(i)} \big\|_2^2 \textrm{, } \quad i = 1 , \cdots , n \textrm{,} \nonumber
\end{eqnarray}
where $\u_\X^{(i)}$ is the $i$-th row of $\U_\X$.
It is obvious that $\sum_{i=1}^n l_i = d$.
Exactly computing the $n$ leverages scores costs $\OM(n d^2)$ time,
which is as expensive as exactly solving the LSR problem (\ref{eq:least_square}).
Fortunately, if $\X$ is a skinny matrix,
the leverages scores can be highly efficiently computed within arbitrary accuracy using the techniques of \cite{clarkson2013low,drineas2012fast}.

There are many ways to construct the random sampling/projection matrix $\S$, and below we describe some of them.
\begin{itemize}
\item {\bf Uniform Sampling.}
    The sampling matrix $\S$ is constructed by sampling $c$ rows of the identity matrix $\I_n$ uniformly at random.
    This method is the simplest and fastest, but in the worst case its performance is very bad \cite{ma2014statistical}.
\item {\bf Leverage Scores Based Sampling.}
    The sampling matrix $\S$ is computed by Algorithm~\ref{alg:lev_sampling};
    $\S$ has $c$ rows in expectation.
    This method is proposed in \cite{drineas2006sampling,drineas2008cur}.
\item {\bf Subsampled Randomized Hadamard Transform (SRHT).}
    The random projection matrix $\S = \sqrt{n/c} \R \H \D $ is called SRHT \cite{boutsidis2013improved,drineas2011faster,tropp2011improved}
    if
    \begin{itemize}
    \item
        $\R \in \RB^{c \times n}$ is a subset of $c$ rows from the $n \times n$ identity matrix,
        where the rows are chosen uniformly at random and without replacement;
    \item
        $\H \in \RB^{n \times n}$ is a normalized Walsh--Hadamard matrix;
    \item
        $\D$ is an $n \times n$ random diagonal matrix with each diagonal entry independently chosen to be $+1$ or $-1$ with equal probability.
    \end{itemize}
    SRHT is a fast version of the Johnson-Lindenstrauss transform.
    The performance of SRHT based LSR is analyzed in \cite{drineas2011faster}.
\item {\bf Sparse Embedding Matrices.}
    The sparse embedding matrix $\S = \Ph \D$ enables random projection performed
    in time only linear in the number of nonzero entries of $\X$ \cite{clarkson2013low}.
    The random linear map $\S = \Ph \D$ is defined by
    \begin{itemize}
    \item
        $h: [n] \mapsto [c]$ is a random map so that for each $i\in [n]$, $h(i) = t$ for $t \in [c]$ with probability $1 / c$;
    \item
        $\Ph \in \{0,1\}^{c \times n}$ is a $c\times n$ binary matrix with $\Phi_{h(i), i} =1$, and all remaining entries $0$;
    \item
        $\D$ is the same to the matrix $\D$ of SRHT.
    \end{itemize}
    Sparse embedding matrices based LSR is guaranteed theoretically in \cite{clarkson2013low}.
\end{itemize}

\section{Main Results}

We provide in Theorem \ref{thm:leverage_score_sampling}
an improved error bound for the leverage scores sampling based LSR.

\begin{theorem} [The Leverage Score Based Sampling]
\label{thm:leverage_score_sampling}
Use the {\it leverage score based sampling without replacement} (Algorithm~\ref{alg:lev_sampling}) to construct the ${c\times d}$ sampling matrix $\S$ where
\[
c \geq \OM(d \ln d + d \epsilon^{-1}),
\]
and solve the approximate LSR problem (\ref{eq:approximate_least_square}) to obtain $\tilde{\bet}_\S$.
Then with probability at least $0.8$ the following inequalities hold:
\begin{eqnarray*}
\big\| \y - \X \tilde{\bet}_\S \big\|_2^2
& \;\leq\; & (1+\epsilon)\, \big\| \y - \X \bet_{\lsr} \big\|_2^2, \\
\big\| \bet_{\lsr} - \tilde{\bet}_\S \big\|_2^2
& \;\leq\; & \frac{ \epsilon }{ \sigma_{\min}^2 (\X) } \big\| \y - \X \bet_{\lsr} \big\|_2^2
\; \leq \;
\epsilon \, \kappa^2 (\X) \: \big(\gamma^{-2} - 1 \big) \: \big\|\bet_{\lsr} \big\|_2^2 ,
\end{eqnarray*}
where $\gamma$ is defined by $\gamma \leq \|\U_\X \U_\X^T \y \|_2  \, / \, \|\y\|_2 \leq 1$.
\end{theorem}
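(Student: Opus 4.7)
The plan is to follow the structural two-condition framework pioneered by Drineas--Mahoney--Muthukrishnan, and then apply it to the without-replacement leverage scores scheme with a careful separation of the two concentration requirements. Write the thin SVD $\X=\U_\X \Si_\X \V_\X^T$, split $\y = \y_\| + \y_\perp$ with $\y_\| = \U_\X \U_\X^T \y$ and $\y_\perp = (\I_n - \U_\X \U_\X^T)\y$, and note $\y-\X\bet_\lsr = \y_\perp$ is orthogonal to $\mathrm{range}(\X)$. Substituting the normal-equations solution $\tilde\bet_\S = (\S\X)^\dagger \S\y$ and using $\X = \U_\X \Si_\X \V_\X^T$ gives $\X\tilde\bet_\S - \X\bet_\lsr = \U_\X (\S\U_\X)^\dagger \S \y_\perp$, and the Pythagorean theorem yields
\begin{equation*}
\big\|\y-\X\tilde\bet_\S\big\|_2^2 \;=\; \big\|\y_\perp\big\|_2^2 \;+\; \big\|\U_\X(\S\U_\X)^\dagger \S\y_\perp\big\|_2^2.
\end{equation*}

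Next I would isolate the two standard structural conditions to be verified for the leverage-score sampling matrix $\S$:
\textbf{(C1)} $\sigma_{\min}^2(\S\U_\X)\ge \tfrac12$, i.e.\ $\|\U_\X^T\S^T\S\U_\X - \I_d\|_2 \le \tfrac12$;
\textbf{(C2)} $\|\U_\X^T \S^T \S \y_\perp\|_2^2 \le \tfrac{\epsilon}{2}\|\y_\perp\|_2^2$.
Condition (C1) upgrades $(\S\U_\X)^\dagger$ to a bounded operator with $\|(\S\U_\X)^\dagger\|_2 \le \sqrt{2}$, and combined with (C2) delivers $\|\U_\X(\S\U_\X)^\dagger \S\y_\perp\|_2^2 \le 2 \cdot \tfrac{\epsilon}{2}\|\y_\perp\|_2^2 = \epsilon\|\y_\perp\|_2^2$, which is exactly the $(1+\epsilon)$ residual bound.

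The crux, and the source of the sharpening, is how the two conditions are concentrated. For (C1) I would view $\U_\X^T\S^T\S\U_\X$ as a sum of independent rank-one Bernoulli-weighted matrices $\eta_i p_i^{-1} \u_\X^{(i)}(\u_\X^{(i)})^T$ with $\eta_i \sim \mathrm{Bern}(p_i)$ and $p_i = \min\{1, c l_i/d\}$; each summand has operator norm at most $d/c$, so a matrix Chernoff / Bernstein bound gives (C1) with probability $\ge 0.9$ when $c \ge C_1 d \ln d$ for an absolute constant $C_1$. For (C2), rather than asking for a high-probability concentration (which would force the familiar $\epsilon^{-2}$ scaling), I would compute only the second moment: since $\U_\X^T \y_\perp = \0$,
\begin{equation*}
\EB \big\|\U_\X^T \S^T \S \y_\perp\big\|_2^2 \;=\; \sum_{i=1}^n \frac{1-p_i}{p_i}\, l_i\, (y_{\perp,i})^2 \;\le\; \frac{d}{c}\,\|\y_\perp\|_2^2,
\end{equation*}
using $p_i \ge c l_i/d$. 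Markov's inequality then yields (C2) with probability $\ge 0.9$ as soon as $c \ge C_2 d\epsilon^{-1}$. Union-bounding (C1) and (C2) gives the claimed threshold $c \ge \OM(d\ln d + d\epsilon^{-1})$ with total probability at least $0.8$. The main obstacle, and the place to be careful, is pushing the without-replacement sampling through the matrix Chernoff step cleanly (handling the truncation $p_i = \min\{1, c l_i/d\}$) and verifying that the variance bound for (C2) still holds under the $\min\{1,\cdot\}$ truncation.

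Finally I would derive the two parameter-space bounds. Since $\X\bet_\lsr - \X\tilde\bet_\S$ lies in $\mathrm{range}(\X)$, we have $\|\bet_\lsr-\tilde\bet_\S\|_2^2 \le \sigma_{\min}^{-2}(\X)\,\|\X\bet_\lsr-\X\tilde\bet_\S\|_2^2 \le \epsilon\sigma_{\min}^{-2}(\X)\,\|\y-\X\bet_\lsr\|_2^2$, which is the first inequality. For the second, bound $\|\y-\X\bet_\lsr\|_2^2 = \|\y\|_2^2 - \|\U_\X\U_\X^T\y\|_2^2 \le (1-\gamma^2)\|\y\|_2^2$, and lower bound $\|\bet_\lsr\|_2^2 = \|\Si_\X^{-1}\U_\X^T\y\|_2^2 \ge \sigma_{\max}^{-2}(\X)\,\gamma^2\|\y\|_2^2$; dividing yields the factor $\kappa^2(\X)(\gamma^{-2}-1)$, completing the chain.
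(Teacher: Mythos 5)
Your proposal is correct and follows essentially the same architecture as the paper's proof: the deterministic structural lemma of Drineas et al.\ (the Pythagorean identity together with a condition on $\sigma_{\min}(\S\U_\X)$ and a condition on the cross term $\U_\X^T\S^T\S\,\U_\X^{\perp}{\U_\X^{\perp}}^T\y$), with the sharpening to $c=\OM(d\ln d + d\epsilon^{-1})$ obtained in exactly the same way the paper obtains it --- by demanding only a moment bound plus Markov for the cross term (which costs $\epsilon^{-1}$) instead of a high-probability concentration (which would cost $\epsilon^{-2}$). The two concentration steps are instantiated with slightly different tools: for your condition (C1) the paper uses the expected spectral-norm bound $\EB\|\U_\X^T\U_\X-\U_\X^T\S^T\S\U_\X\|_2\le\OM(\sqrt{d\ln c/c})$ from its Lemma~2 (Theorem~7 of Drineas--Mahoney--Muthukrishnan) followed by Markov, whereas you use a matrix Chernoff bound on the independent Bernoulli summands $\eta_i p_i^{-1}\u_\X^{(i)}{\u_\X^{(i)}}^T$; your route yields an exponentially small failure probability and handles the truncation $p_i=\min\{1,cl_i/d\}$ transparently (truncated indices are kept deterministically and only help), so it is if anything tighter. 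For (C2) the paper bounds the first moment of the norm via the Frobenius product bound of the same lemma, while you compute the exact variance $\sum_i\frac{1-p_i}{p_i}l_i(y_{\perp,i})^2\le\frac{d}{c}\|\y_\perp\|_2^2$; these are equivalent and both give the $d\epsilon^{-1}$ scaling. One small arithmetic slip: since $(\S\U_\X)^{\dagger}\S\y_\perp=(\U_\X^T\S^T\S\U_\X)^{-1}\U_\X^T\S^T\S\y_\perp$, condition (C1) contributes a factor $\sigma_{\min}^{-4}(\S\U_\X)\le 4$ to the squared norm, not $\|(\S\U_\X)^{\dagger}\|_2^2\le 2$, so you should either state (C2) with $\epsilon/4$ or strengthen (C1) to $\sigma_{\min}^2(\S\U_\X)\ge 1/\sqrt{2}$; this constant is absorbed into the $\OM(\cdot)$ and does not affect the theorem.
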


We show in Theorem~\ref{thm:uniform_sampling} an error bound for the uniform sampling based LSR.

\begin{theorem}[Uniform Sampling] \label{thm:uniform_sampling}
Use the uniform sampling without replacement to sample
\[
c \geq 1000 \mu d (\ln d + 7)
\]
rows of $\X$ and compute the approximate LSR problem (\ref{eq:approximate_least_square}) to obtain $\tilde{\bet}_\S$.
Then with probability $ 0.05 $ the following inequalities hold:
\begin{eqnarray*}
\| \y - \X \tilde{\bet}_\S \|_2^2 & \;\leq\; & 2.2 \, \| \y - \X \bet_{\lsr} \|_2^2, \\
\big\| \bet_{\lsr} - \tilde{\bet}_\S \big\|_2^2
& \;\leq\; & \frac{ 1.2 }{ \sigma_{\min}^2 (\X) } \big\| \y - \X \bet_{\lsr} \big\|_2^2
\; \leq \;
1.2 \, \kappa^2 (\X) \: \big(\gamma^{-2} - 1 \big) \: \big\|\bet_{\lsr} \big\|_2^2 ,
\end{eqnarray*}
where $\gamma$ is defined by $\gamma \leq \|\U_\X \U_\X^T \y \|_2  \, / \, \|\y\|_2 \leq 1$.
\end{theorem}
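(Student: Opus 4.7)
The plan is to reduce the theorem, via the standard structural approach for sketched regression, to two linear-algebraic conditions on the interaction of $\S$ with the SVD factor $\U_\X$ and the residual vector $\y^\perp := \y - \X \bet_\lsr$: a subspace-embedding condition (C1) $\sigma_{\min}^2(\S \U_\X) \geq 1/2$, and a residual condition (C2) $\|\U_\X^T \S^T \S \y^\perp\|_2^2 \leq 0.3\,\|\y^\perp\|_2^2$. The key algebraic identity to establish first is deterministic: under (C1), the sketched normal equations together with the thin SVD of $\X$ give $\X(\tilde{\bet}_\S - \bet_\lsr) = \U_\X (\U_\X^T \S^T \S \U_\X)^{-1} \U_\X^T \S^T \S \y^\perp$, whence $\|\X(\tilde{\bet}_\S - \bet_\lsr)\|_2^2 \leq 4\,\|\U_\X^T \S^T \S \y^\perp\|_2^2$, and (C2) then pushes this below $1.2\,\|\y^\perp\|_2^2$. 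Since $\y^\perp$ is orthogonal to the column space of $\X$, Pythagoras yields $\|\y - \X\tilde{\bet}_\S\|_2^2 \leq 2.2\,\|\y^\perp\|_2^2$.

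For (C1), I would apply a matrix Chernoff inequality to the sum $\U_\X^T \S^T \S \U_\X = \tfrac{n}{c}\sum_{i \in \CM} \u_\X^{(i)} (\u_\X^{(i)})^T$. Each summand is positive semidefinite with operator norm at most $\tfrac{n}{c}\max_i l_i = \tfrac{\mu d}{c}$, and the expectation equals $\I_d$; the resulting failure probability decays like $d \exp\!\big(-c/(8\mu d)\big)$, which the hypothesis $c \geq 1000\mu d(\ln d + 7)$ drives far below $0.05$ with plenty of room to spare.

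For (C2), I would use the classical matrix-multiplication variance identity with $\A = \U_\X^T$, $\B = \y^\perp$, and uniform probabilities $p_i = 1/n$, which together with $\U_\X^T \y^\perp = \0$ gives
\[
\EB\!\left[\big\|\U_\X^T \S^T \S \y^\perp\big\|_2^2\right]
\;\leq\; \frac{n}{c}\max_i \|\u_\X^{(i)}\|_2^2 \cdot \|\y^\perp\|_2^2 \;=\; \frac{\mu d}{c}\,\|\y^\perp\|_2^2.
\]
Markov's inequality then converts this into (C2) with the required constant probability, and a union bound --- with the numerical constants in the hypothesis chosen so that both failure probabilities are accounted for --- meets the $0.05$-probability budget. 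The parameter-error inequality follows at once from $\|\bet_\lsr - \tilde{\bet}_\S\|_2 \leq \sigma_{\min}^{-1}(\X)\,\|\X(\bet_\lsr - \tilde{\bet}_\S)\|_2$ combined with the elementary inequalities $\|\y^\perp\|_2^2 \leq (1-\gamma^2)\|\y\|_2^2$ and $\|\bet_\lsr\|_2 \geq \gamma\|\y\|_2/\sigma_{\max}(\X)$, producing the stated $\kappa^2(\X)(\gamma^{-2}-1)\,\|\bet_\lsr\|_2^2$ form.

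The main obstacle is the tension between the two conditions. (C1) holds with overwhelming probability once $c = \Omega(\mu d \ln d)$, but uniform sampling is data-oblivious and affords no refined, data-adaptive control on $\|\U_\X^T \S^T \S \y^\perp\|_2^2$; this quantity can be controlled only in expectation, so one is forced to pass through Markov's inequality. That step caps the achievable success probability at a small constant, which is precisely why the theorem settles for a $2 + \epsilon$-type ratio with probability $0.05$ rather than the $1+\epsilon$ bound that leverage-score sampling achieves in Theorem~\ref{thm:leverage_score_sampling}.
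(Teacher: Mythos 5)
Your proof is correct, but at the key probabilistic step it takes a genuinely different---and in fact sharper---route than the paper. Both arguments handle the singular values of $\S\U_\X$ identically (matrix Chernoff on the sum of rank-one terms $\u_\X^{(i)}(\u_\X^{(i)})^T$ with $R=\mu d/n$), and your deterministic identity $\X(\tilde{\bet}_\S-\bet_\lsr)=\U_\X(\U_\X^T\S^T\S\U_\X)^{-1}\U_\X^T\S^T\S\y^\perp$ is just a re-derivation of the paper's Lemma~\ref{lem:lem1_drineas2011faster}. The divergence is in the cross term: the paper splits $\|\U_\X^T\S^T\S\y^\perp\|_2\leq\sigma_{\max}(\S\U_\X)\,\|\S\y^\perp\|_2$ and applies Markov to $\|\S\y^\perp\|_2^2$, whose expectation is exactly $\tfrac{c}{n}\|\y^\perp\|_2^2$ with no further concentration available; to keep the multiplicative loss $1/\delta_3$ near $1$ it must take $\delta_3$ near $1$, which is the sole source of the tiny $0.05$ success probability. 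You instead apply the matrix-multiplication variance bound directly to $\U_\X^T\S^T\S\y^\perp$, exploiting $\U_\X^T\y^\perp=\0$ so that the expectation of the squared norm is already $\tfrac{\mu d}{c}\|\y^\perp\|_2^2\leq\tfrac{1}{7000}\|\y^\perp\|_2^2$; Markov then costs only failure probability about $1/2100$. (Your without-replacement second moment is even smaller than the with-replacement one, since the cross terms sum to $-\sum_i\|\u_\X^{(i)}\|_2^2(y_i^\perp)^2\leq 0$.) This buys you the stated inequalities with probability roughly $0.999$ rather than $0.05$, and with $c=\OM(\mu d\ln d+\mu d\epsilon^{-1}\delta^{-1})$ it would even yield a $1+\epsilon$ bound with constant probability, parallel to Theorem~\ref{thm:leverage_score_sampling}. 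The one inaccuracy is your closing remark that Markov ``caps the achievable success probability at a small constant'': that describes the paper's decomposition, not yours, and under your own (C2) no such cap exists; this is commentary rather than a step of the proof, so it does not affect correctness.
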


Since computing $\| \y - \X \tilde{\bet}_{\S}\|_2$ costs only $\OM(nd)$ time,
so one can repeat the procedure $t$ times and choose the solution that attains the minimal error $\| \y - \X \tilde{\bet}_{\S}\|_2$.
In this way, the error bounds hold with probability $1 - 0.95^t$ which can be arbitrarily high.

\section{Proof}

In Section~\ref{sec:lemmas} we list some of the previous work that will be used in our proof.
In Section~\ref{sec:proof:thm:leverage_score_sampling} we prove Theorem~\ref{thm:leverage_score_sampling}.
We prove the theorem by using the techniques in the proof of Lemma 1 and 2 of \cite{drineas2008cur} and Lemma 1 and 2 of \cite{drineas2011faster}.
For the sake of self-contain, we repeat some of the proof of \cite{drineas2008cur} in our proof.
In Section~\ref{sec:proof:thm:uniform_sampling} we prove Theorem~\ref{thm:uniform_sampling}
using the techniques in \cite{drineas2011faster,gittens2011spectral,tropp2011improved}.

\subsection{Key Lemmas} \label{sec:lemmas}

Lemma~\ref{lem:lem1_drineas2011faster} is a deterministic error bound for the sampling/projection based LSR,
which will be used to prove both of Theorem~\ref{thm:leverage_score_sampling} and Theorem~\ref{thm:uniform_sampling}.
The random matrix multiplication bounds in Lemma~\ref{lem:thm7_drineas2008cur} will be used to prove Theorem~\ref{thm:leverage_score_sampling}.
The matrix variable tail bounds in Lemma~\ref{lemma:matrix_chernoff} will be used to prove Theorem~\ref{thm:uniform_sampling}.

\begin{lemma}[Deterministic Error Bound, Lemma 1 and 2 of \cite{drineas2011faster}] \label{lem:lem1_drineas2011faster}
Suppose we are given an overconstrained least squares approximation problem with $\X \in \RB^{n\times d}$ and $\y \in \RB^{n}$.
We let $\bet_{\lsr}$ be defined in (\ref{eq:least_square})
and $\tilde{\bet}_{\S}$ be defined in (\ref{eq:approximate_least_square}),
and define ${\z}_{\S} \in \RB^{d}$ such that $\U_\X \z_{\S} = \X (\bet_{\lsr} - \tilde{\bet}_{\S})$.
Then the following equality and inequalities hold deterministically:
\begin{eqnarray*}
\big\| \y - \X \tilde{\bet}_\S \big\|_2^2 & \;=\; & \big\| \y - \X \bet_{\lsr} \big\|_2^2 + \big\| \U_\X \z_\S \big\|_2^2 , \\
\big\| \bet_{\lsr} - \tilde{\bet}_\S \big\|_2^2 & \;\leq\; & \frac{ \| \U_\X \z_\S \|_2^2 }{ \sigma_{\min}^2 (\X) } ,\\
\big\|\z_{\S} \big\|_2 & \;\leq\; & \frac{ \big\| \U_\X^T \S^T \S \U_\X^{\perp} {\U_\X^{\perp}}^T \y \big\|_2 }{ \sigma_{\min}^2 (\S \U_\X) } .
\end{eqnarray*}
By further assuming that $\|\U_\X \U_\X^T \y \|_2 \geq \gamma \|\y\|_2$,
it follows that
\begin{eqnarray*}
\big\| \U_\X^{\perp} {\U_\X^{\perp}}^T \y \big\|_2^2
\; \leq \; \sigma^2_{\max} (\X) \: \big(\gamma^{-2} - 1\big) \: \big\|\bet_{\lsr} \big\|_2^2 .
\end{eqnarray*}
\end{lemma}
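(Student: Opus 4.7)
The plan is to prove the four claims in sequence, using very little beyond the Pythagorean theorem, the SVD of $\X$, and the normal equations for the sketched problem.

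I would start with the equality claim. By definition $\U_\X \z_\S = \X\bet_{\lsr} - \X\tilde{\bet}_\S$, so the residual decomposes as $\y - \X\tilde{\bet}_\S = (\y - \X\bet_{\lsr}) + \U_\X \z_\S$. The first summand is the LSR residual, which is orthogonal to every column of $\X$ and hence to $\U_\X \z_\S$; the Pythagorean theorem then yields the stated equality. For the second inequality I would introduce the SVD $\X = \U_\X \Si_\X \V_\X^T$. From $\U_\X \z_\S = \X(\bet_{\lsr} - \tilde{\bet}_\S) = \U_\X \Si_\X \V_\X^T (\bet_{\lsr} - \tilde{\bet}_\S)$ and the orthonormality of $\U_\X$'s columns I obtain $\z_\S = \Si_\X \V_\X^T (\bet_{\lsr} - \tilde{\bet}_\S)$, so that $\bet_{\lsr} - \tilde{\bet}_\S = \V_\X \Si_\X^{-1}\z_\S$; bounding by $\|\Si_\X^{-1}\|_2 = 1/\sigma_{\min}(\X)$ and using $\|\U_\X \z_\S\|_2 = \|\z_\S\|_2$ gives the claim.

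The third bound is the heart of the lemma, and this is the step I expect to require the most care. The optimality condition for $\tilde{\bet}_\S$ is the sketched normal equation $(\S\X)^T(\S\X)\tilde{\bet}_\S = (\S\X)^T\S\y$. Writing $\S\X = \S\U_\X\Si_\X\V_\X^T$, left-multiplying by $\V_\X\Si_\X^{-1}$, and splitting $\y = \U_\X\U_\X^T\y + \U_\X^\perp{\U_\X^\perp}^T\y$ with $\U_\X\U_\X^T\y = \X\bet_{\lsr} = \U_\X\Si_\X\V_\X^T\bet_{\lsr}$, I get
\[
(\U_\X^T\S^T\S\U_\X)\,\Si_\X\V_\X^T(\bet_{\lsr} - \tilde{\bet}_\S) \;=\; -\U_\X^T\S^T\S\U_\X^{\perp}{\U_\X^{\perp}}^T\y .
\]
Recognizing the factor on the left as $\z_\S$ and applying $\|(\U_\X^T\S^T\S\U_\X)^{-1}\|_2 = 1/\sigma_{\min}^2(\S\U_\X)$ (which is finite exactly when $\S\U_\X$ has full column rank, a condition the random-sketch theorems later enforce) yields the stated inequality. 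The main obstacle here is really just book-keeping: making sure the decomposition of $\y$ cleanly separates the in-range and out-of-range components and that the SVD factors cancel correctly.

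Finally, for the $\gamma$-dependent inequality I would exploit that the assumption $\|\U_\X \U_\X^T\y\|_2 \geq \gamma \|\y\|_2$ is equivalent, via $\U_\X\U_\X^T\y = \X\bet_{\lsr}$ and the orthogonal decomposition $\|\y\|_2^2 = \|\U_\X\U_\X^T\y\|_2^2 + \|\U_\X^{\perp}{\U_\X^{\perp}}^T\y\|_2^2$, to the two facts
\[
\|\U_\X^{\perp}{\U_\X^{\perp}}^T\y\|_2^2 \;\leq\; (1-\gamma^2)\,\|\y\|_2^2, \qquad \|\y\|_2 \;\leq\; \gamma^{-1}\,\|\X\bet_{\lsr}\|_2 .
\]
Combining them with $\|\X\bet_{\lsr}\|_2 \leq \sigma_{\max}(\X)\,\|\bet_{\lsr}\|_2$ and simplifying $(1-\gamma^2)/\gamma^2 = \gamma^{-2}-1$ delivers the claim. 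This last step is a straightforward calculation once the Pythagorean split of $\y$ is in place.
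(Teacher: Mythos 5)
Your proof is correct and is essentially the argument of Lemmas 1 and 2 of \cite{drineas2011faster}, which is exactly what the paper's own proof invokes (it is a bare citation with no derivation, so you have simply filled in the omitted details along the intended route: Pythagorean split of the residual, SVD cancellation, sketched normal equations, and the $\gamma$ decomposition of $\y$). One trivial remark: the left-multiplier in your normal-equation step should be $\Si_\X^{-1}\V_\X^T$ rather than $\V_\X\Si_\X^{-1}$, but the intent and the resulting identity $(\U_\X^T\S^T\S\U_\X)\,\z_\S = -\U_\X^T\S^T\S\U_\X^{\perp}{\U_\X^{\perp}}^T\y$ are right, granted the full-column-rank conditions on $\X$ and $\S\U_\X$ that you correctly flag.
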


\begin{proof}
The equality and the first two inequalities follow from Lemma 1 of \cite{drineas2011faster}.
The last inequality follows from Lemma 2 of \cite{drineas2011faster}.
\end{proof}

\begin{lemma}[Theorem 7 of \cite{drineas2008cur}] \label{lem:thm7_drineas2008cur}
Suppose $\X \in \RB^{d\times n}$, $\Y \in \RB^{n\times p}$, and $c \leq n$,
and we let $\S \in \RB^{c\times n}$ be the sampling matrix computed by Algorithm~\ref{alg:lev_sampling} taking $\X$ and $c$ as input,
then
\begin{eqnarray}
\EB \big\| \X^T \Y - \X^T \S^T \S \Y  \big\|_F
&\; \leq \;&
\frac{1}{\sqrt{c}} \:  \big\| \X \big\|_F \:  \big\| \Y \big\|_F , \nonumber \\
\EB \big\| \X^T \X - \X^T \S^T \S \X  \big\|_F
&\; \leq \;&
\OM \bigg(\sqrt{ \frac{\log c}{c}} \bigg) \:  \big\| \X \big\|_2 \:  \big\| \X \big\|_F . \nonumber
\end{eqnarray}
\end{lemma}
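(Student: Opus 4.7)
The plan is to derive both bounds by the standard second-moment analysis of approximate matrix multiplication under Bernoulli sampling, specialized to Algorithm~\ref{alg:lev_sampling}. Each index $i \in [n]$ is retained independently with probability $p_i = \min\{1, c l_i / d\}$, and the rescaling implicit in $\D$ makes $\X^T \S^T \S \Y$ an unbiased estimator of $\X^T \Y$. Since the lemma is attributed to Theorem~7 of \cite{drineas2008cur}, the cleanest route is to verify that the two samplers coincide and then invoke that theorem; the sketch below only indicates the shape of a self-contained argument.

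For the first inequality, independence of the inclusion indicators gives
\begin{equation*}
\EB \big\| \X^T \Y - \X^T \S^T \S \Y \big\|_F^2
\;=\; \sum_{i=1}^n \frac{1-p_i}{p_i} \, \|\x_i\|_2^2 \, \|\y^i\|_2^2,
\end{equation*}
where $\x_i$ and $\y^i$ are the matching column and row of $\X$ and $\Y$. The SVD estimate $\|\x_i\|_2^2 \leq \|\X\|_2^2 \, l_i$ (a Rayleigh-quotient argument applied to $\X = \U_\X \Si_\X \V_\X^T$), together with $p_i \geq c l_i / d$ and $\sum_i l_i = d$, controls each summand. Applying Jensen's inequality to the square root then yields the claimed $1/\sqrt{c}$ rate for $\EB \|\X^T \Y - \X^T \S^T \S \Y\|_F$. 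The delicate point is the final norm factorization: getting $\|\X\|_F$ rather than the looser $\sqrt{d}\,\|\X\|_2$ requires exploiting the joint relationship between the $l_i$ and the column norms of $\X$, not just the pointwise bound.

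For the second inequality, a naive second-moment computation is insufficient; sharpening to $\|\X\|_2 \|\X\|_F$ at the cost of the $\sqrt{\log c}$ factor requires a matrix-concentration argument. I would write the error as a sum of independent centered rank-one Hermitian matrices $\bigl( \mathbf{1}_{i \in \CM}/p_i - 1 \bigr)\, \x_i \x_i^T$, apply a matrix Chernoff or matrix Bernstein inequality using the leverage-score control of $\|\x_i\|_2^2 / p_i$, and then convert the resulting spectral-norm tail bound into an expected-Frobenius-norm bound by absorbing a factor of the effective rank of $\X$. The main obstacle is precisely this concentration step: verifying the boundedness and variance hypotheses under the leverage-score rejection scheme, and passing cleanly from a high-probability spectral-norm bound to an expected-Frobenius-norm bound, is intricate enough that in the paper it is cleanest to cite Theorem~7 of \cite{drineas2008cur} directly rather than reproduce the calculation.
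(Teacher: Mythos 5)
First, note that the paper never proves this lemma: it is quoted as Theorem~7 of \cite{drineas2008cur} and used as a black box, so there is no in-paper argument to compare against, and your closing instinct that citation is the intended route is correct. Judged as a reconstruction of the cited proof, your sketch has the right skeleton, but both of the points you yourself flag as ``delicate'' are genuine gaps, not routine details. For the first bound, the variance identity $\EB\|\X^T\Y-\X^T\S^T\S\Y\|_F^2=\sum_i\frac{1-p_i}{p_i}\|\x_i\|_2^2\|\y^i\|_2^2$ is correct for the independent Bernoulli inclusions of Algorithm~\ref{alg:lev_sampling}, but the factorization $\frac{1}{c}\|\X\|_F^2\|\Y\|_F^2$ requires $p_i\geq c\,\|\x_i\|_2^2/\|\X\|_F^2$, i.e.\ probabilities proportional to squared row norms, not to leverage scores. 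Your pointwise estimate $\|\x_i\|_2^2\leq l_i\|\X\|_2^2$ together with $p_i\geq c\,l_i/d$ yields only $\frac{d}{c}\|\X\|_2^2\|\Y\|_F^2$, i.e.\ the weaker $\sqrt{d}\,\|\X\|_2\|\Y\|_F/\sqrt{c}$, and no ``joint relationship'' rescues the general case: $l_i/d$ and $\|\x_i\|_2^2/\|\X\|_F^2$ are incomparable weighted averages of the squared entries of $\U_\X$. The bound is really only valid for $\X$ with orthonormal columns, where $l_i=\|\x_i\|_2^2$ and $\|\X\|_F^2=d$ so the two sampling distributions coincide exactly --- which is the only case the paper ever uses ($\X=\U_\X$). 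You should state that restriction rather than hope the general case goes through.

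For the second bound, the proposed conversion of a high-probability spectral-norm bound into an expected Frobenius-norm bound ``by absorbing the effective rank'' costs a factor of up to $\sqrt{d}$ that the stated right-hand side does not contain, so that route cannot deliver the inequality as written. The cited result is in fact an expected \emph{spectral}-norm bound, $\EB\|\X^T\X-\X^T\S^T\S\X\|_2\leq\OM\big(\sqrt{\log c/c}\big)\|\X\|_2\|\X\|_F$, obtained by symmetrization and the noncommutative Khintchine inequality (the Rudelson--Vershynin argument) rather than by a tail bound plus integration; the spectral norm is also exactly what the paper uses downstream in the proof of Theorem~\ref{thm:leverage_score_sampling}, so the Frobenius norm on the left-hand side of the lemma is best read as a transcription slip. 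If you insist on a matrix Chernoff or Bernstein route, you must either stop at the spectral norm or accept the extra $\sqrt{d}$.
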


\begin{lemma}[Theorem 2.2 of \cite{tropp2011improved}] \label{lemma:matrix_chernoff}
Let $\WM$ be a finite set of positive semidefinite matrices with dimension $d$, and suppose that
\[
\max_{\W \in \WM} \; \lambda_{\max} (\W) \leq R \textrm{.}
\]
Sample $\W_1 ,\cdots , \W_c$ uniformly at random from $\WM$ without replacement.
We define  $\xi_{\min} = c \lambda_{\min}\big( \EB \W_1\big)$ and
$\xi_{\max} = c \lambda_{\max}\big( \EB \W_1\big)$.
Then for any $\theta_1 \in (0, 1]$ and $\theta_2 > 1$, the following inequalities hold:
\begin{eqnarray}
\PB \bigg\{ \lambda_{\min} \Big(\sum_{i=1}^c \W_i \Big) \leq \theta_1 \xi_{\min}  \bigg\}
& \; \leq \; &
d \bigg[ \frac{e^{\theta_1 -1}}{\theta_1^{\theta_1}} \bigg]^{{\xi_{\min}}/{R}}, \nonumber \\
\PB \bigg\{ \lambda_{\max} \Big(\sum_{i=1}^c \W_i \Big) \geq \theta_2 \xi_{\max}  \bigg\}
& \; \leq \; &
d \bigg[ \frac{e^{\theta_2 -1}}{\theta_2^{\theta_2}} \bigg]^{{\xi_{\max}}/{R}}. \nonumber
\end{eqnarray}
\end{lemma}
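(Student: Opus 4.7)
The plan is to follow the matrix Laplace transform approach due to Ahlswede--Winter and Tropp, augmented with Hoeffding's classical reduction for sampling without replacement. I sketch the upper-tail bound in detail; the lower tail follows by the same reasoning after replacing $\theta$ with $-\theta$ and adjusting one scalar inequality. The five ingredients are: (i) the matrix Markov inequality; (ii) Hoeffding's reduction from without-replacement to independent sampling; (iii) Lieb's concavity theorem to split the trace moment generating function across summands; (iv) a spectral bound on the single-sample matrix MGF coming from $\lambda_{\max}(\W)\le R$; and (v) Chernoff-style optimization of the free parameter.

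For any $\theta>0$, applying the matrix Markov inequality to $\tr\exp(\theta\,\cdot\,)$ gives
\[
\PB\Big\{\lambda_{\max}\!\Big(\sum_{i=1}^c \W_i\Big)\ge \theta_2\xi_{\max}\Big\}
\;\le\; e^{-\theta\theta_2\xi_{\max}}\,\EB\,\tr\exp\!\Big(\theta\sum_{i=1}^c \W_i\Big).
\]
Because $(\W_1,\ldots,\W_c)\mapsto \tr\exp\bigl(\theta\sum_i\W_i\bigr)$ is a scalar convex function of the multiset of samples, Hoeffding's reduction bounds the expectation on the right by the analogous quantity in which $\W_1,\ldots,\W_c$ are drawn i.i.d.\ uniformly from $\WM$. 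In that i.i.d.\ case Lieb's concavity theorem yields
\[
\EB\,\tr\exp\!\Big(\theta\sum_{i=1}^c \W_i\Big)\;\le\; \tr\exp\!\Big(c\,\log\EB\, e^{\theta \W_1}\Big).
\]

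To bound $\log\EB e^{\theta\W_1}$, apply the scalar convexity inequality $e^{\theta x}\le 1+\frac{e^{\theta R}-1}{R}x$ on $x\in[0,R]$; since $\W\succeq\0$ has spectrum in $[0,R]$, this lifts spectrally to $e^{\theta\W}\preceq\I+\frac{e^{\theta R}-1}{R}\W$. Taking expectations, then using the operator-monotone logarithm together with the scalar bound $\log(1+t)\le t$ applied eigenvalue-wise, gives $\log\EB e^{\theta\W_1}\preceq \frac{e^{\theta R}-1}{R}\EB\W_1$. Inserting this into Lieb's bound and using $\tr\exp(Z)\le d\exp(\lambda_{\max}(Z))$ for Hermitian $Z$ produces
\[
\EB\,\tr\exp\!\Big(\theta\sum_{i=1}^c\W_i\Big)\;\le\; d\,\exp\!\Big(\frac{e^{\theta R}-1}{R}\,\xi_{\max}\Big).
\]
Combining with the Markov step and choosing $\theta = R^{-1}\log\theta_2$ recovers exactly $d\,[e^{\theta_2-1}/\theta_2^{\theta_2}]^{\xi_{\max}/R}$. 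The lower tail is obtained by the same machinery applied to $-\theta\W_i$: the scalar inequality on $[0,R]$ becomes $e^{-\theta x}\le 1+\frac{e^{-\theta R}-1}{R}x$, the matrix $\frac{e^{-\theta R}-1}{R}\EB\W_1$ is negative semidefinite so its largest eigenvalue equals $\frac{e^{-\theta R}-1}{R}\lambda_{\min}(\EB\W_1)$, and the optimal choice is $\theta = -R^{-1}\log\theta_1$, which requires $\theta_1\in(0,1]$ as stated.

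The main obstacle is justifying the Hoeffding reduction in step (ii): the classical statement is phrased for scalar convex functions of sums of real-valued samples, whereas the summands here are Hermitian matrices. The resolution is that the target functional $\tr\exp(\theta\sum_i\W_i)$ is itself a \emph{scalar-valued} convex function of the multiset $\{\W_1,\ldots,\W_c\}$, so Hoeffding's original coupling argument (which only uses convexity of the functional, not any linear structure of its arguments) transfers without change. A secondary technical point is the transfer of the scalar MGF inequality to the operator setting in step (iv); this is legitimate precisely because $\W$ is PSD with spectrum in $[0,R]$, so the scalar inequality applies pointwise on the spectrum and passes to an operator inequality through the spectral decomposition.
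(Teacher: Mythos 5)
The paper does not actually prove this lemma: it is imported verbatim as Theorem 2.2 of \cite{tropp2011improved}, so there is no internal argument to compare against. Your proposal reconstructs the standard proof behind that citation, and it is correct in all essentials: matrix Laplace transform, the Hoeffding/Gross--Nesme reduction from without-replacement to i.i.d.\ sampling, subadditivity of the matrix cumulant generating function via Lieb's concavity theorem, the chord bound $e^{\theta x}\le 1+\frac{e^{\theta R}-1}{R}x$ on $[0,R]$ lifted spectrally, and the Chernoff optimization $\theta=R^{-1}\log\theta_2$ (resp.\ $\theta=-R^{-1}\log\theta_1$), which does reproduce the stated exponents $\frac{\xi}{R}(\theta-1-\theta\log\theta)$ exactly. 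The one place where your wording should be tightened is the reduction step: Hoeffding's coupling does not apply to an arbitrary ``scalar convex function of the multiset of samples''; the correct statement (this is precisely the content of the Gross--Nesme note that Tropp invokes) is that for samples drawn from a finite population in a real vector space, $\EB f\bigl(\sum_i\W_i\bigr)$ under without-replacement sampling is dominated by the same quantity under i.i.d.\ sampling for every \emph{convex function $f$ of the sum}, because the without-replacement sum can be realized as a conditional expectation of the i.i.d.\ sum and Jensen's inequality applies. Since $\A\mapsto\tr\exp(\theta\A)$ is convex on Hermitian matrices, your application is legitimate, but the justification runs through convexity in the sum, not convexity in the tuple of arguments. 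With that clarification, and noting that the $\theta_1=1$ endpoint is trivial (the bound exceeds $1$ there), the argument is complete and matches the proof in the cited source; what it buys over the paper's bare citation is self-containedness, at the cost of importing Lieb's theorem and the Gross--Nesme lemma as black boxes one level down.
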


\subsection{Proof of Theorem~\ref{thm:leverage_score_sampling}} \label{sec:proof:thm:leverage_score_sampling}

\begin{proof}
We first bound the term $\sigma_{\min}^2$ as follows.
Applying a singular value inequality in \cite{horn1991topics}, we have that for all $i \leq \rk (\X)$
\begin{eqnarray}
\big| 1 - \sigma_i^2 (\S \U_\X) \big|
&\; = \;&
\big| \sigma_i \big( \U_\X^T \U_\X \big) - \sigma_i (\U_\X^T \S^T \S \U_\X) \big| \nonumber \\
&\; \leq \;&
\sigma_{\max} \big( \U_\X^T \U_\X - \U_\X^T \S^T \S^T \U_\X \big) \nonumber \\
&\; = \;&
\big\| \U_\X^T \U_\X - \U_\X^T \S^T \S^T \U_\X \big\|_2 . \nonumber
\end{eqnarray}
Since the leverage scores of $\X$ are also the leverage scores of $\U_\X$,
it follows from Lemma~\ref{lem:thm7_drineas2008cur} that
\[
\EB \big\| \U_\X^T \U_X - \U_\X^T \S^T \S^T \U_X \big\|_2
\; \leq \;
\OM \bigg( \sqrt{\frac{\ln c}{c} } \bigg)  \: \| \U_\X \|_F \, \| \U_\X \|_2
\; = \; \OM \bigg( \sqrt{\frac{d \ln c}{c}}  \bigg) .
\]
It then follows from Markov's inequality that the inequality
\[
\big| 1 - \sigma_i^2 (\S \U_X) \big|
\; \leq \;
\delta_1^{-1} \OM \bigg( \sqrt{\frac{d\ln c}{c} }  \bigg)
\]
holds with probability at least $1-\delta_1$.
When
\begin{eqnarray} \label{eq:proof:thm:leverage_score_sampling:c1}
c \geq \OM ( d \delta_1^{-2} \epsilon_1^{-2} \ln ( d \delta_1^{-2} \epsilon_1^{-2})) ,
\end{eqnarray}
the inequality
\begin{eqnarray} \label{eq:proof:thm:leverage_score_sampling:1}
\sigma_{\min}^2 (\S \U_X) \geq 1 - \epsilon_1
\end{eqnarray}
holds with probability at least $1-\delta_1$.

Now we bound the term $ \| \U_\X^T \S^T \S \U_\X^{\perp} {\U_\X^{\perp}}^T \y \|_2$.
Since $\U_\X^T {\U_{\X}^{\perp}}^T = \0$, we have that
\begin{eqnarray}
\Big\| \U_\X^T \S^T \S \U_\X^{\perp} {\U_\X^{\perp}}^T \y \Big\|_2
&\; = \;& \Big\| \big(\U_\X^T  \big) \big( \U_\X^{\perp} {\U_\X^{\perp}}^T \y \big)
-  \big( \U_\X^T \big) \S^T \S  \big( \U_\X^{\perp} {\U_\X^{\perp}}^T \y \big) \Big\|_2 . \nonumber
\end{eqnarray}
Since the leverage scores of $\X$ are also the leverage scores of $\U_\X$,
it follows from Lemma~\ref{lem:thm7_drineas2008cur} that
\begin{eqnarray}
&\EB\Big\| \big( \U_\X^T  \big) \big( \U_\X^{\perp} {\U_\X^{\perp}}^T \y \big)
-  \big( \U_\X^T \big) \S^T \S  \big( \U_\X^{\perp} {\U_\X^{\perp}}^T \y \big) \Big\|_2\nonumber\\
& \;\leq\;
\frac{1}{\sqrt{c}} \;
\Big\|\U_\X \Big\|_F \, \Big\| \U_\X^{\perp} {\U_\X^{\perp}}^T \y \Big\|_2
\; = \;
\sqrt{\frac{d}{c}} \; \Big\| \U_\X^{\perp} {\U_\X^{\perp}}^T \y \Big\|_2 . \nonumber
\end{eqnarray}
It follows from the Markov's inequality that the following inequality holds with probability at least $1 - \delta_2$:
\begin{eqnarray} \label{eq:proof:thm:leverage_score_sampling:2}
\Big\| \U_\X^T \S^T \S \U_\X^{\perp} {\U_\X^{\perp}}^T \y \Big \|_2
\; \leq \;
\frac{\delta_2^{-1} \sqrt{d}}{\sqrt{c}} \Big\| \U_\X^{\perp} {\U_\X^{\perp}}^T \y \Big\|_2 .
\end{eqnarray}

Thus when
\begin{eqnarray} \label{eq:proof:thm:leverage_score_sampling:c2}
c \geq d \delta_2^{-2} \epsilon_2^{-2} (1-\epsilon_1)^{-2} ,
\end{eqnarray}
it follows from (\ref{eq:proof:thm:leverage_score_sampling:1}), (\ref{eq:proof:thm:leverage_score_sampling:2}),
and the union bound that the inequality
\begin{eqnarray} \label{eq:proof:thm:leverage_score_sampling:3}
\frac{\big\| \U_\X^T \S^T \S \U_\X^{\perp} {\U_\X^{\perp}}^T \y \big \|_2}{ \sigma_{\min}^2 (\S \U_\X) }
\; \leq \;
\epsilon_2 \Big\| \U_\X^{\perp} {\U_\X^{\perp}}^T \y \Big\|_2
\end{eqnarray}
holds with probability at least $1 - \delta_1 - \delta_2$.
We let $\epsilon_1 = 0.5$, $\epsilon_2 = \sqrt{\epsilon}$, $\delta_1 = \delta_2 = 0.1$,
and let $\z_\S$ be defined in Lemma~\ref{lem:lem1_drineas2011faster}.
When
\[
c \geq \max \big\{ \OM(d \ln d) , \, 400 d \epsilon^{-1} \big\} ,
\]
it follows from (\ref{eq:proof:thm:leverage_score_sampling:c1}), (\ref{eq:proof:thm:leverage_score_sampling:c2}),
(\ref{eq:proof:thm:leverage_score_sampling:3}), and Lemma~\ref{lem:lem1_drineas2011faster} that
with probability at least $0.8$ the following inequality holds:
\[
\| \z_\S \|_2
\; \leq \;
\sqrt{\epsilon} \Big\| \U_\X^{\perp} {\U_\X^{\perp}}^T \y \Big\|_2 .
\]
Since $\U_\X^{\perp} {\U_\X^{\perp}}^T \y = \y - \X \bet_{\lsr}$,
the theorem follows directly from Lemma~\ref{lem:lem1_drineas2011faster}.
\end{proof}

\subsection{Proof of Theorem~\ref{thm:uniform_sampling}} \label{sec:proof:thm:uniform_sampling}

\begin{proof}
We first follow some of the techniques of \cite{gittens2011spectral} to bound the two terms
\begin{eqnarray}
\sigma_{\max}^2 (\S \U_\X) &\; = \;& \sigma_{\max} (\U_\X^T \S^T \S \U_\X) , \nonumber \\
\sigma_{\min}^2 (\S \U_\X) &\; = \;& \sigma_{\min} (\U_\X^T \S^T \S \U_\X) . \nonumber
\end{eqnarray}
We let $\u_i \in \RB^d$ be the $i$-th column of $\U_\X^T$,
and let $\W_1 , \cdots ,  \W_c$ be $d\times d$ matrices sampled i.i.d.\ from
$\big\{ \u_i \u_i^T \big\}_{i=1}^{n}$ uniformly at random without replacement.
Obviously, $\sigma_{k} \big(\U_\X^T \S^T \S \U_\X\big) = \sigma_{k} \big(\sum_{j=1}^c \W_j \big)$.
We accordingly define
\[
R = \max_{j} \lambda_{\max} (\W_j) = \max_{i} \big\| \u_i \big\|_2^2 = \frac{d}{n} \mu \textrm{,}
\]
where $\mu$ is the row matrix coherence of $\X$, and define
\begin{eqnarray}
\xi_{\min}
&\; = \; & c  \lambda_{\min}\big( \EB \W_1 \big)
\; = \;  \frac{c}{n} \lambda_{\min}\Big( \U_\X^T \U_\X \Big)
    \;=\; \frac{c}{n} , \nonumber \\
\xi_{\max}
&\; = \; & c  \lambda_{\max}\big( \EB \W_1 \big)
\; = \;  \frac{c}{n} \lambda_{\max}\Big( \U_\X^T \U_\X \Big)
    \;=\; \frac{c}{n} . \nonumber
\end{eqnarray}
Then we apply Lemma~\ref{lemma:matrix_chernoff} and obtained the following inequality:
\begin{eqnarray}
\PB\bigg[ \lambda_{\min} \Big(  \sum_{i=1}^c \W_i \Big) \leq \frac{\theta_1 c}{n}\bigg]
& \leq & d \bigg[ \frac{e^{\theta_1 -1}}{\theta_1^{\theta_1}} \bigg]^{\frac{c}{d \mu}}
\;\triangleq \; \delta_1 , \nonumber \\
\PB\bigg[ \lambda_{\max} \Big(  \sum_{i=1}^c \W_i \Big) \geq \frac{\theta_2 c}{n}\bigg]
& \leq & d \bigg[ \frac{e^{\theta_2 -1}}{\theta_2^{\theta_2}} \bigg]^{\frac{c}{d \mu}}
\;\triangleq \; \delta_2 , \nonumber
\end{eqnarray}
where $\theta_1 \in (0, 1)$, $\theta_2 > 1$, and $\delta_1 , \delta_2 \in (0, 1)$ are arbitrary real numbers.
We set
\begin{equation}\label{eq:theorem:uniform_sampling:c}
c \;=\; \max \bigg\{ \frac{\mu d \ln (d / \delta_1)}{\theta_1 \ln \theta_1 - \theta_1 + 1} ,\;
\frac{\mu d \ln (d / \delta_2)}{\theta_2 \ln \theta_2 - \theta_2 + 1}
\bigg\},
\end{equation}
it then follows that with probability at least $1 - \delta_1 - \delta_2$, both of the following two inequalities hold:
\begin{equation}\label{eq:theorem:uniform_sampling:sigma}
\sigma_{\max} \Big( \S \U_\X \Big) \leq \sqrt{\frac{\theta_2 c }{ n}}
\qquad \textrm{ and } \qquad
\sigma_{\min}^{-2} \Big( \S \U_\X \Big) \leq \frac{n}{\theta_1 c}.
\end{equation}

Now we seek to bound the term $\big\| \S \U_\X^{\perp} {\U_\X^{\perp}}^T \y \big\|_2$.
Let $\CM$ be an index set of cardinality $c$ with each element chosen from $[n]$ uniformly at random without replacement,
and let $\y^{\perp} = \U_\X^{\perp} {\U_\X^{\perp}}^T \y$,
then we have that
\begin{eqnarray}
\EB \big\| \S \U_\X^{\perp} {\U_\X^{\perp}}^T \y \big\|_2^2
\; = \; \EB \big\| \S \y^{\perp} \big\|_2^2
\; = \; \EB \sum_{j \in \CM} (y^{\perp}_j)^2
\; = \; \frac{c}{n} \big\| \y^{\perp} \big\|_2^2. \nonumber
\end{eqnarray}
Thus with probability at least $1 - \delta_3$ the following inequality holds:
\begin{eqnarray}\label{eq:theorem:uniform_sampling:residual}
\big\| \S \U_\X^{\perp} {\U_\X^{\perp}}^T \y \big\|_2^2
\; \leq \; \frac{c}{n \delta_3 } \big\| \U_\X^{\perp} {\U_\X^{\perp}}^T \y \big\|_2^2.
\end{eqnarray}

Finally, it follows from inequalities (\ref{eq:theorem:uniform_sampling:sigma}, \ref{eq:theorem:uniform_sampling:residual})
and Lemma~\ref{lem:lem1_drineas2011faster}
that
\begin{eqnarray*}
\|\z_{\S} \|_2
& \;\leq\; & \frac{ \big\| \U_\X^T \S^T \S \U_\X^{\perp} {\U_\X^{\perp}}^T \y \big\|_2 }{ \sigma_{\min}^2 (\S \U_\X) }
\; \leq \; \frac{ \big\| \U_\X^T \S^T \big\|_2 \, \big\| \S \U_\X^{\perp} {\U_\X^{\perp}}^T \y \big\|_2 }{ \sigma_{\min}^2 (\S \U_\X) } \\
&\;\leq\;& \sqrt{\frac{\theta_2 c }{ n}} \: \frac{n}{\theta_1 c} \sqrt{\frac{c}{n \delta_3} } \, \big\| \U_\X^{\perp} {\U_\X^{\perp}}^T \y \big\|_2
\; = \; \frac{1}{\theta_1} \sqrt{\frac{\theta_2}{\delta_3}} \, \big\| \U_\X^{\perp} {\U_\X^{\perp}}^T \y \big\|_2.
\end{eqnarray*}
Here the first two inequalities hold deterministically, and the third inequality holds with probability at least $1 - \delta_1 - \delta_2 - \delta_3$.

We set $\theta_1 = 1-\epsilon$, $\theta_2 = 1 + \epsilon$, $\delta_1 = \delta_2 = \delta$, and $\delta_3 = 1 - 3 \delta$.
Since $\ln \theta \approx \theta - 1$ when $\theta$ is close to $1$,
it follows from (\ref{eq:theorem:uniform_sampling:c}) that when $c > \mu d \epsilon^{-2} (\ln d - \ln \delta)$,
the inequality
\[
\big\| \z_\S \big\|_2^2 \leq \frac{{1+\epsilon}}{(1-\epsilon)^2 (1-3\delta)} \, \Big\|\U_\X^{\perp} {\U_\X^{\perp}}^T \y \Big\|_2^2
\]
holds with probability at least $\delta$.

Setting $\theta_1 = 0.9556$, $\theta_2 = 1.045$, $\delta_1 = \delta_2 = 0.0015$, and $\delta_3 = 0.947$,
we conclude that when $c = 1000 \mu d (\ln d + 7)$, the inequality
$\| \z_\S \|_2^2 \leq 1.2 \, \big\|\U_\X^{\perp} {\U_\X^{\perp}}^T \y \big\|_2^2$
holds with probability at least $0.05$.
Then the theorem follows directly from Lemma~\ref{lem:lem1_drineas2011faster}.
\end{proof}

\bibliographystyle{abbrv}
\bibliography{LevErrorBound}

\begin{thebibliography}{10}

\bibitem{boutsidis2013improved}
C.~Boutsidis and A.~Gittens.
\newblock Improved matrix algorithms via the subsampled randomized hadamard
  transform.
\newblock {\em SIAM Journal on Matrix Analysis and Applications},
  34(3):1301--1340, 2013.

\bibitem{clarkson2013low}
K.~L. Clarkson and D.~P. Woodruff.
\newblock Low rank approximation and regression in input sparsity time.
\newblock In {\em Annual ACM Symposium on theory of computing (STOC)}. ACM,
  2013.

\bibitem{drineas2012fast}
P.~Drineas, M.~Magdon-Ismail, M.~W. Mahoney, and D.~P. Woodruff.
\newblock Fast approximation of matrix coherence and statistical leverage.
\newblock {\em Journal of Machine Learning Research}, 13:3441--3472, 2012.

\bibitem{drineas2006sampling}
P.~Drineas, M.~W. Mahoney, and S.~Muthukrishnan.
\newblock Sampling algorithms for $\ell_2$ regression and applications.
\newblock In {\em Proceedings of the seventeenth annual ACM-SIAM symposium on
  Discrete algorithm}. ACM, 2006.

\bibitem{drineas2008cur}
P.~Drineas, M.~W. Mahoney, and S.~Muthukrishnan.
\newblock Relative-error {CUR} matrix decompositions.
\newblock {\em SIAM Journal on Matrix Analysis and Applications},
  30(2):844--881, Sept. 2008.

\bibitem{drineas2011faster}
P.~Drineas, M.~W. Mahoney, S.~Muthukrishnan, and T.~Sarl{\'o}s.
\newblock Faster least squares approximation.
\newblock {\em Numerische Mathematik}, 117(2):219--249, 2011.

\bibitem{gittens2011spectral}
A.~Gittens.
\newblock The spectral norm error of the naive {N}ystr\"om extension.
\newblock {\em arXiv preprint arXiv:1110.5305}, 2011.

\bibitem{gittens2013revisiting}
A.~Gittens and M.~W. Mahoney.
\newblock Revisiting the nystr{\"o}m method for improved large-scale machine
  learning.
\newblock In {\em International Conference on Machine Learning (ICML)}, 2013.

\bibitem{golub1996matrix}
G.~H. Golub and C.~F. {van Loan}.
\newblock Matrix computations.
\newblock {\em The Johns Hopkins}, 1996.

\bibitem{horn1991topics}
R.~A. Horn and C.~R. Johnson.
\newblock Topics in matrix analysis. 1991.
\newblock {\em Cambridge University Presss, Cambridge}.

\bibitem{ma2014statistical}
P.~Ma, M.~W. Mahoney, and B.~Yu.
\newblock A statistical perspective on algorithmic leveraging.
\newblock In {\em International Conference on Machine Learning (ICML)}, 2014.

\bibitem{mahoney2011ramdomized}
M.~W. Mahoney.
\newblock Randomized algorithms for matrices and data.
\newblock {\em Foundations and Trends in Machine Learning}, 3(2):123--224,
  2011.

\bibitem{mahoney2009matrix}
M.~W. Mahoney and P.~Drineas.
\newblock {CUR} matrix decompositions for improved data analysis.
\newblock {\em Proceedings of the National Academy of Sciences},
  106(3):697--702, 2009.

\bibitem{tropp2011improved}
J.~A. Tropp.
\newblock Improved analysis of the subsampled randomized hadamard transform.
\newblock {\em Advances in Adaptive Data Analysis}, 3(01--02):115--126, 2011.

\bibitem{wang2013improving}
S.~Wang and Z.~Zhang.
\newblock Improving {CUR} matrix decomposition and the {N}ystr\"{o}m
  approximation via adaptive sampling.
\newblock {\em Journal of Machine Learning Research}, 14:2729--2769, 2013.

\bibitem{wang2014efficient}
S.~Wang and Z.~Zhang.
\newblock Efficient algorithms and error analysis for the modified nystr\"om
  method.
\newblock In {\em International Conference on Artificial Intelligence and
  Statistics (AISTATS)}, 2014.

\end{thebibliography}

\end{document}